\documentclass[letterpaper]{article} 
\usepackage{aaai2026}  
\usepackage{times}  
\usepackage{helvet}  
\usepackage{courier}  
\usepackage[hyphens]{url}  
\usepackage{graphicx} 
\urlstyle{rm} 
\usepackage{natbib}  
\usepackage{caption} 
\frenchspacing  
\setlength{\pdfpagewidth}{8.5in} 
\setlength{\pdfpageheight}{11in} 
%
\usepackage{algorithm}
\usepackage{algorithmic}
\usepackage{amsmath}
\usepackage{amsmath}
\usepackage{amssymb}
\usepackage{mathtools}
\usepackage{amsthm}
\usepackage{pifont}

\usepackage{microtype}
\usepackage{color}
\usepackage{pifont}
\usepackage{booktabs}
\usepackage{multirow}
\usepackage{array} 
\usepackage{makecell}

\usepackage{microtype}
\usepackage{booktabs} 
\usepackage{graphicx}  
\usepackage{multirow}  
\usepackage{amsmath} 
\usepackage{subfig}

\def\ie{\emph{i.e.}}

\def\etal{\emph{et al.}}

\newcommand{\ours}{\text{CAE}}
\newcommand{\x}{\mathbf{x}}

\newcommand{\Ev}{\Phi_{\mathsf{v}}}
\newcommand{\Et}{\Phi_{\mathsf{t}}}
\newcommand{\bu}{\mathbf{u}}
\newcommand{\bv}{\mathbf{v}}

\newcommand{\tableCellHeight}{1}
\newcommand{\tabstyle}[1]{
  \setlength{\tabcolsep}{#1}
  \renewcommand{\arraystretch}{\tableCellHeight}
  \centering
  \small
}

\newtheorem{theorem}{Theorem}
\usepackage{bm}
\usepackage{color}
\usepackage{xcolor}


%
\usepackage{newfloat}
\usepackage{listings}
\DeclareCaptionStyle{ruled}{labelfont=normalfont,labelsep=colon,strut=off} 
\lstset{%
	basicstyle={\footnotesize\ttfamily},
	numbers=left,numberstyle=\footnotesize,xleftmargin=2em,
	aboveskip=0pt,belowskip=0pt,%
	showstringspaces=false,tabsize=2,breaklines=true}
\floatstyle{ruled}
\newfloat{listing}{tb}{lst}{}
\floatname{listing}{Listing}
%
\pdfinfo{
/TemplateVersion (2026.1)
}

\setcounter{secnumdepth}{2} 

%


\title{Hierarchical Semantic Alignment for Image Clustering}
\author{
    Written by AAAI Press Staff\textsuperscript{\rm 1}\thanks{With help from the AAAI Publications Committee.}\\
    AAAI Style Contributions by Pater Patel Schneider,
    Sunil Issar,\\
    J. Scott Penberthy,
    George Ferguson,
    Hans Guesgen,
    Francisco Cruz\equalcontrib,
    Marc Pujol-Gonzalez\equalcontrib
}

\author{
    Xingyu Zhu\textsuperscript{\rm 1, 2}, 
    Beier Zhu\textsuperscript{\rm 2}, 
    Yunfan Li\textsuperscript{\rm 3}, 
    Junfeng Fang\textsuperscript{\rm 4}, 
    Shuo Wang\textsuperscript{\rm 1}\thanks{Corresponding author}, \\
    Kesen Zhao\textsuperscript{\rm  2},
    Hanwang Zhang\textsuperscript{\rm  2}
}

\affiliations{
    \textsuperscript{\rm 1} University of Science and Technology of China;
    \textsuperscript{\rm 2} Nanyang Technological University;\\
    \textsuperscript{\rm 3} Sichuan University;
    \textsuperscript{\rm 4} National University of Singapore.\\
    xingyuzhu@mail.ustc.edu.cn, shuowang.edu@gmail.com, \\
%
}

\usepackage{bibentry}
\begin{document}

\maketitle

\begin{abstract}
Image clustering is a classic problem in computer vision, which categorizes images into different groups. Recent studies utilize nouns as external semantic knowledge to improve clustering performance. However, these methods often overlook the inherent ambiguity of nouns, which can distort semantic representations and degrade clustering quality. To address this issue,  we propose a hierar\textbf{C}hical sem\textbf{A}ntic alignm\textbf{E}nt method for image clustering, 
dubbed \textbf{CAE}, which improves clustering performance in a training-free manner. In our approach, we incorporate two complementary types of textual semantics: caption-level descriptions, which convey fine-grained attributes of image content, and noun-level concepts, which represent high-level object categories. 
We first select relevant nouns from WordNet and descriptions from caption datasets to construct a semantic space aligned with image features. 
Then, we align image features with selected nouns and captions via optimal transport to obtain a more discriminative semantic space.
Finally, we combine the enhanced semantic and image features to perform clustering. Extensive experiments across 8 datasets demonstrate the effectiveness of our method, notably surpassing the state-of-the-art training-free approach with a 4.2\% improvement in accuracy and a 2.9\% improvement in adjusted rand index (ARI) on the ImageNet-1K dataset.
\end{abstract}
\section{Introduction}
\label{sec:intro}
\begin{figure}
  \centering
    \includegraphics[width=1\linewidth]{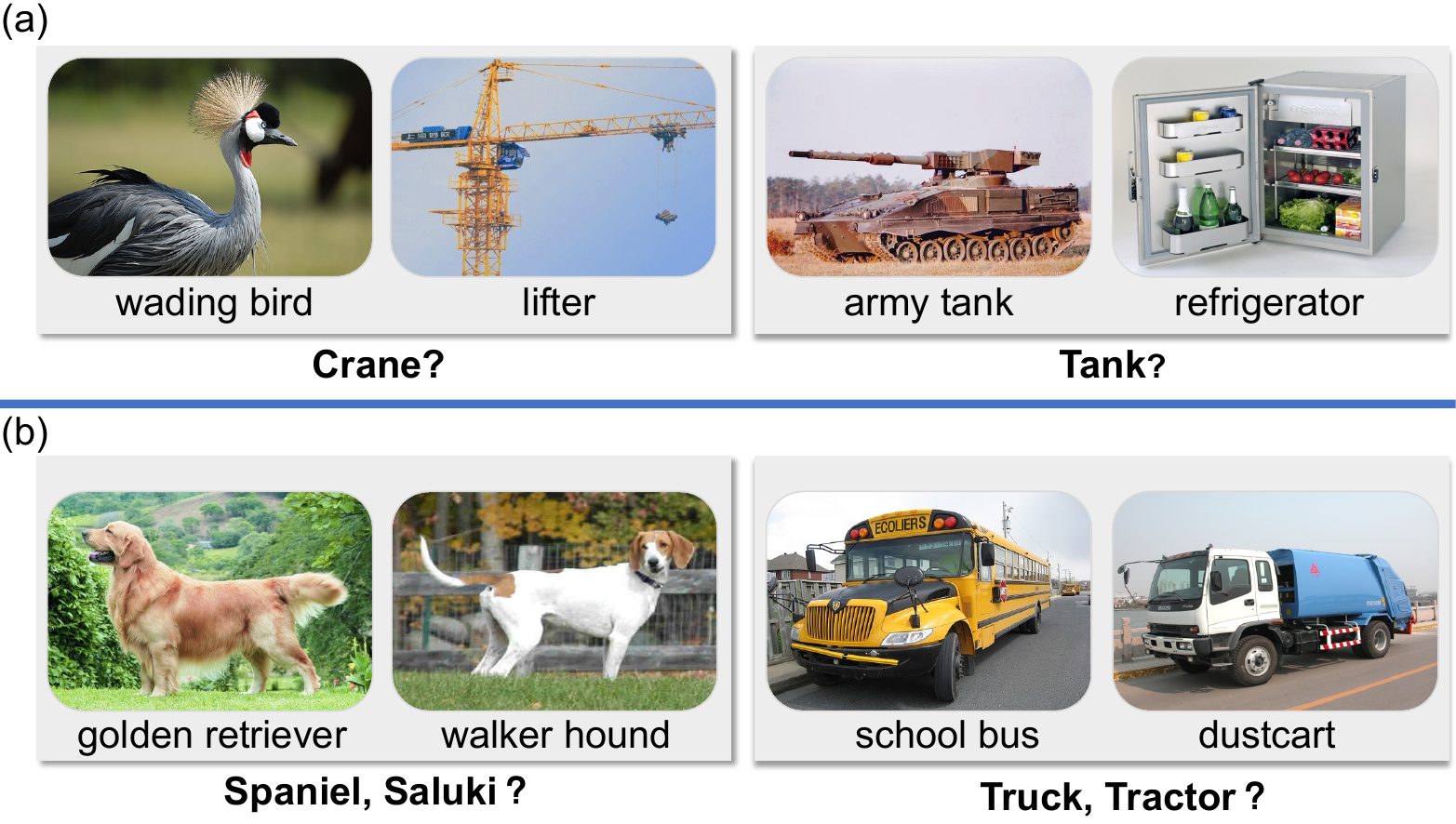}
    \caption{Observations of ambiguity in a single noun from the ImageNet~\cite{ImageNet} dataset. In (a), the words “crane” and ``tank'' can refer to entirely different objects, respectively. In (b), the semantic similar words ``spaniel'', ``saluki'', ``truck'', and ``tractor'' fail to distinguish the fine-grained classes.}
    \label{fig:motivation}
\end{figure}
Image clustering~\cite{likas2003global} is a foundational task in computer vision that aims at grouping images into clusters, where instances from the same cluster share similar semantics.
In the era of deep learning, a series of works center on learning discriminative features~\cite{qiuEOC, protomm} for clustering, such as contrastive learning~\cite{CC, TCL} and self-supervised learning~\cite{MOCO, SimCLR}.
The recent progress of deep clustering has reached a plateau, as the internal priors from the images themselves provide limited improvement. To address this, recent methods seek to incorporate external priors, such as text, to 
 enhance clustering performance.  For instance, Cai~\etal~\cite{SIC} map images into a semantic space constructed from related nouns in WordNet~\cite{WordNet}. Similarly, Li~\etal~\cite{TAC} combine noun embeddings with image embeddings to facilitate clustering. 
 
 Although these methods leverage the rich semantics encoded in textual data, addressing that the inclusion of external knowledge can enhance the ability to differentiate between visually similar clusters, they overlook the inherent ambiguities of nouns. These ambiguities can ultimately diminish the effectiveness of clustering, particularly in cases where the same noun can refer to multiple concepts or subclasses. As shown in Figure~\ref{fig:motivation}~(a),  the noun ``crane'' can refer to a bird or a lifting machine, and the ``tank'' could denote a military vehicle or a refrigerator. 
Additionally, Figure~\ref{fig:motivation} (b) illustrates that a similar noun fails to represent the specific characteristics of fine-grained classes. For example, the term ``spaniel'' or ``saluki'' does not distinguish between a ``golden retriever'' and a ``walker hound,'' nor does the noun ``truck'' or ``tractor'' clearly differentiate between a ``school bus" and a ``dustcart." Therefore,  relying solely on nouns as external knowledge for clustering struggles to provide precise semantics, especially when faced with ambiguities arising from polysemy or multiple subclasses.

\begin{figure}
  \centering
    \includegraphics[width=1\linewidth]{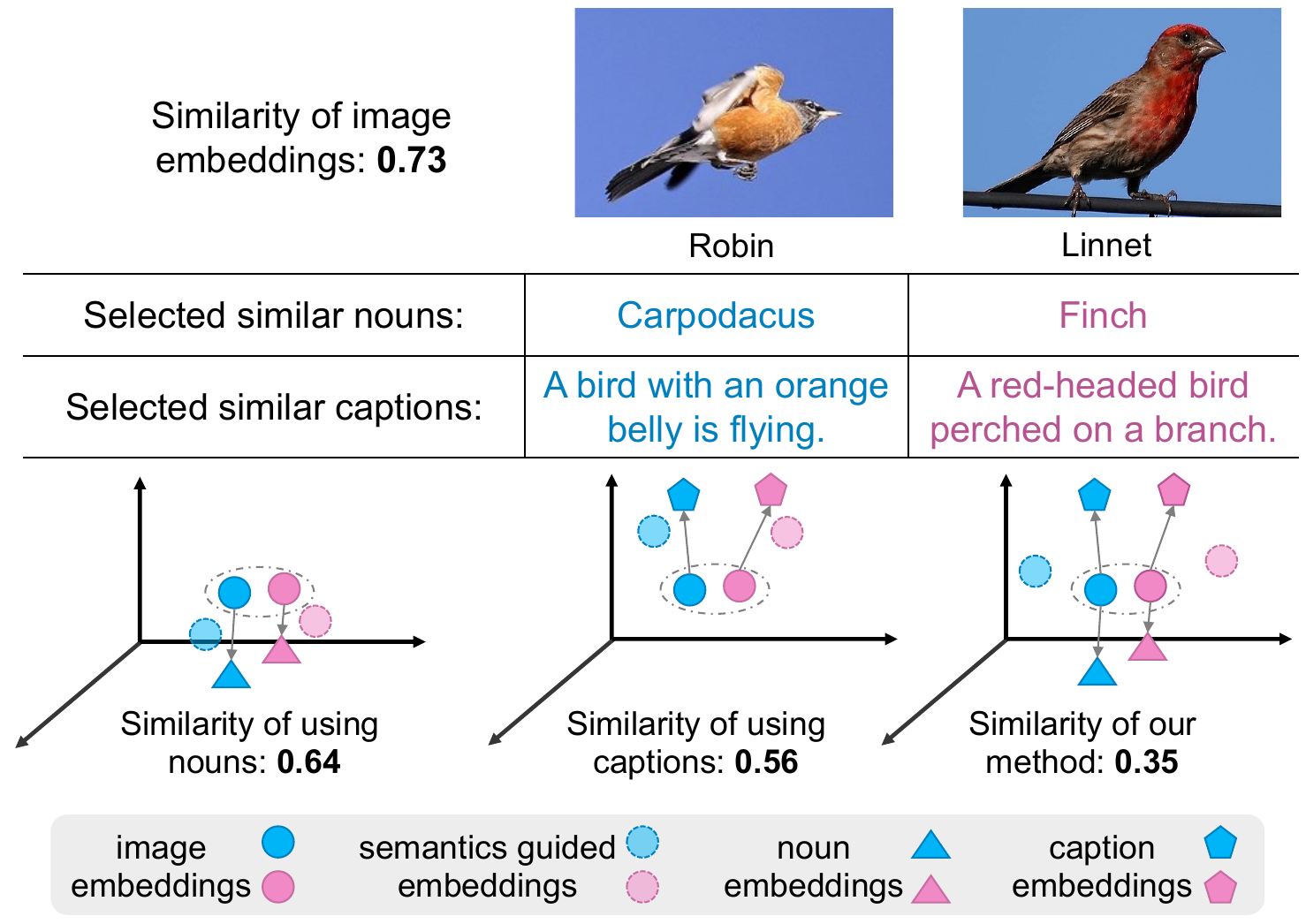}
    \caption{Comparison of embeddings similarity using nouns, captions, and our proposed method for two similar bird images (``Robin" and ``Linnet"). Despite the high similarity in image embeddings (0.73), using only nouns or captions yields higher semantic similarity (0.64 and 0.56, respectively). By combining both nouns and captions, our method reduces the similarity score to 0.35, providing a more accurate distinction between the images.}
    \vspace{-0.3cm}
    \label{fig:case}
\end{figure}
To address these challenges, we propose a training-free method, dubbed \textbf{CAE}, that combines caption-based semantics from image descriptions~\cite{Flick} with noun-based semantics from WordNet~\cite{WordNet} for guided image clustering. These two types of semantics are complementary: nouns capture high-level object categories, while captions provide fine-grained attribute details of the image content.
By combining these two forms of semantics, we can achieve a more comprehensive understanding of the image. As illustrated in Figure~\ref{fig:case}, we compare the effectiveness of using only nouns, only captions, and our proposed combination of both. 
Consider two visually similar bird images from different classes: a robin and a linnet with a cosine similarity of 0.73 between their image embeddings (solid circle). When using only nouns, we select the most similar terms from WordNet: ``Carpodacus" for the robin and ``Finch" for the linnet. These nouns (triangle) are relevant birds but are not the true labels, so they introduce some discrepancy and result in decreasing the similarity from 0.73 to 0.64, as the categorical distinctions between the nouns capture only part of the semantic difference between the images.
When using captions alone, ``A bird with an orange belly is flying" for the robin and ``A red-headed bird perched on a branch" for the linnet. These captions (pentagon) emphasize the differences in the birds' characteristics (orange belly versus red-headed) but retain the common object bird, contributing to reducing the similarity to 0.56.  Although captions offer a contextual distinction, the shared object limits their ability to differentiate the images fully. However, our method combines both nouns and captions, achieves a more distinguishable representation by leveraging both the categorical distinction from nouns and the contextual details from captions, and leads to a significant reduction in similarity, from 0.73 to 0.35.

Specifically, our method consists of two components: semantic space construction and semantic space interaction. In the semantic space construction stage, we first assign all nouns and captions to image semantic centers and then select relevant nouns and captions to construct the counterparts for images. In the adaptive semantics fusion module, we enable image, noun, and caption features to interact through prototype-guided weighting based on their semantic similarity. Finally, the fused representations are used to guide clustering in a more discriminative way.

The main contributions are summarized as follows:
\begin{itemize}
    \item We propose a training-free method that exploits the external semantic knowledge from both nouns and captions to effectively guide the clustering process. 
    \item We first construct a semantic space for images by aligning its distribution with relevant nouns and descriptions. Then we design an adaptive fusion strategy by leveraging this space to represent the images.
    \item We demonstrate the effectiveness of our proposed method through extensive experiments on five classic datasets and three challenging datasets, showing consistent and significant improvements over existing baselines, including outperforming zero-shot CLIP.
\end{itemize}




\section{Related Works}
\label{sec:related_work}
In this section, we begin with a concise review of deep clustering methods, followed by a discussion on pre-trained vision-language models and their applications.
\begin{figure*}[htbp]
  \centering
    \includegraphics[width=1\linewidth]{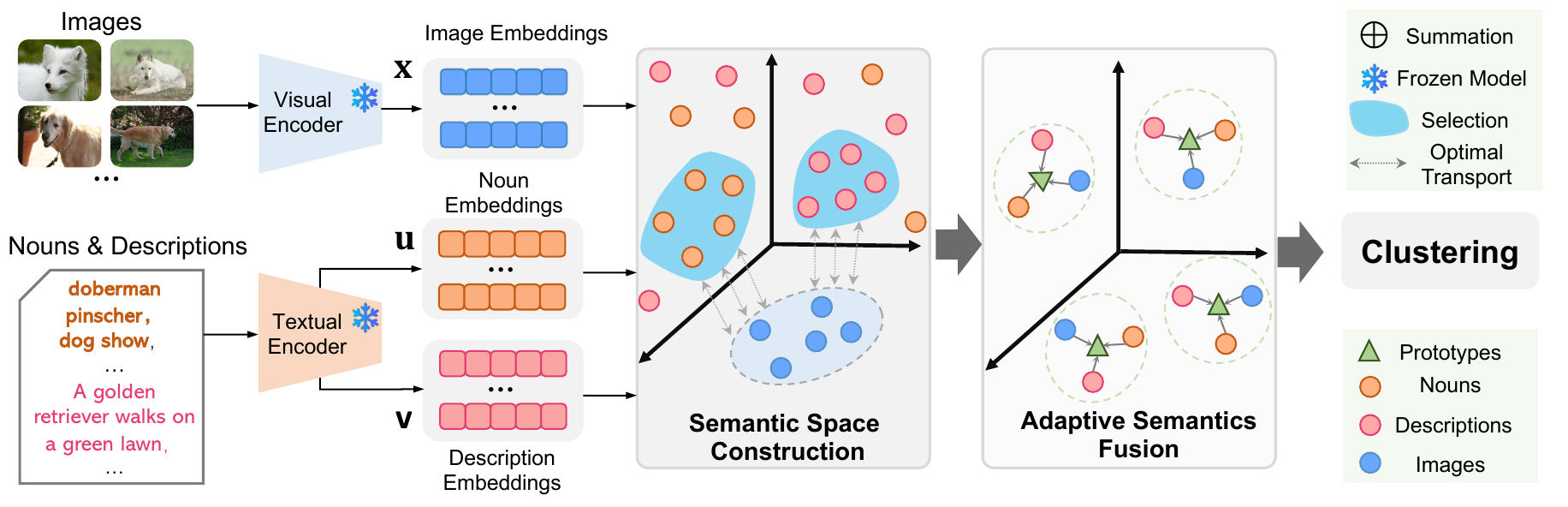}
    \caption{An overview of our method, which consists of two components, \ie, (a) \textbf{Semantic Space Construction:} Select the nouns and descriptions that include the same semantics with image embeddings. (b) \textbf{Adaptive Semantics Fusion:} The selected nouns and descriptions embeddings are leveraged to boost image semantics through adaptive fusion.}
    \label{fig:framework}
\end{figure*}

\subsection{Deep Clustering}
To tackle high-dimensional real-world data, deep clustering methods leverage neural networks to learn discriminative features, enabling the effective categorization of samples into distinct clusters~\cite{DeepClusteringPeng2016, JULE, DEC, DeepClusteringPeng2018}. Early deep clustering research focuses on adapting classic clustering objectives~\cite{Yang2017Towards, ji2017deep, shaham2018spectral} into loss functions to optimize neural networks. Recently, motivated by the success of contrastive learning~\cite{SimCLR, MOCO}, contrastive clustering methods leverage the augmentation invariance at instance-~\cite{IMSAT, IIC} or cluster-level~\cite{CC, PICA} to achieve end-to-end clustering. Another branch of research resorts to pseudo-labeling techniques to further boost the clustering performance~\cite{SCAN, TCL, SPICE}. Notably, different from previous multi-modal clustering methods~\cite{CCA, MaoYGY21} that require paired image-text data as input, externally-guided clustering method~\cite{SIC, TAC} does not rely on prior knowledge such as the pairing information and aims at exploring broader external knowledge.

Our work further explores the externally-guided clustering paradigm, advancing both the scope and application of external knowledge. Compared to existing works that leverage semantic information through cross-modal distillation~\cite{TAC} or pseudo-labeling~\cite{SIC}, we demonstrate that image caption models offer promising guidance for clustering. Moreover, We propose a more effective strategy to incorporate image and textual semantics, through the optimal transport and a
residual attention mechanism.

\subsection{Zero/Few-shot Classification}
With advances in pre-training techniques, vision-language pre-training models (VLM), such as CLIP~\cite{CLIP}, have demonstrated a remarkable ability to align images and texts within a unified feature space. Thanks to the incorporation of textural semantics, these models readily adapt to a variety of downstream tasks, such as classification~\cite{MartinHSPA24, APE, zhu2025enhancing}, in few-shot~\cite{TangLW0H24, SSP} or even zero-shot~\cite{CHiLS, Ge0G00AILZ23, VIC, VisDesc, frolic} scenarios. For example, they perform classification by calculating the similarity between the test image and predefined class names.  To better leverage the semantics in class names, Menon~\etal~\cite{VisDesc} proposed prompting a large language model to generate distinct descriptions for specific classes. Novack~\etal~\cite{CHiLS} enriched the class names through a hierarchical fine-grained label set. It is worth noting that Conti~\etal~\cite{VIC} addresses the open vocabulary classification problem by using a multimodal large language model (MVLM) to generate descriptions and then parse class names for each image. However, most of these methods rely heavily on the availability of class names as prior knowledge, which may not always be accessible in real-world scenarios, particularly in clustering tasks. As a result, the potential of the text modality in VLMs cannot be fully utilized, leading to sub-optimal performance.

In this work, we present a more flexible approach to using VLMs by incorporating nouns and descriptions that are not predefined, offering a solution to the challenge of lacking class name prior knowledge. Specifically, we propose an advanced text space construction strategy that effectively captures and leverages visual semantics, thereby enhancing visual representations for image clustering tasks.  
We hope this strategy will inspire broader applications of VLMs in fully unsupervised tasks.

\section{Method}
In this section, we present our training-free method as illustrated in Figure~\ref{fig:framework}. We begin by introducing the settings and notations used throughout our approach in Section~\ref{sec:pre}. Following this, we describe our method in detail, including the construction of the semantic space in Section~\ref{sec:cons} and the semantics fusion in Section~\ref{sec:semantics_fusion}.

\subsection{Preliminaries}\label{sec:pre}
{Giving an image dataset $\{x_i\}_{i=1}^N$ with $N$ samples, whose embeddings are computed as $\x_i = \Ev({x_i})$, where $\Ev$ represents CLIP's visual encoder. To capture the semantics of these images, we introduce two textual datasets. The first is a nouns dataset $\{\bu_i\}_{i=1}^{T}$ containing $T$ words from WordNet~\cite{WordNet}, a large English lexical database that groups words into sets of synonyms called synsets. The second is a captions dataset $\{\bv_i\}_{i=1}^M$ with $M$ captions from Flickr~\cite{Flick, FangIWWSDS22}, an online photo-sharing platform where images are often accompanied by user-generated captions. The noun embeddings $\mathbf{u}_i = \Et({u}_i)$ and  caption embeddings $\mathbf{v}_i = \Et({v}_i) $ are calculated using CLIP's textual encoder $\Et$. The image embeddings and textual embeddings share the same dimension, where $\x_i, \mathbf{u}_i, \mathbf{v}_i \in \mathbb{R}^{d}$.  The goal is to assign the images into $K$ clusters.}

\subsection{Semantic Space Construction}\label{sec:cons}
\noindent{\textbf{Filtering nouns and descriptions.}} External semantic knowledge has been shown to enhance the performance of image classification and clustering tasks~\cite{SPICE,CLIP,TAC,SIC,AFR}. In this paper, we leverage textual semantics to assist clustering tasks. Notably, clustering tasks lack priors of object names or object attributes for individual images. To address this, we construct a semantic space by selecting a subset of nouns and captions from WordNet and Flickr that encompass the semantics of objects and attributes, enabling the description of images in the text modality.

A well-constructed semantic space should focus on image-relevant semantics while excluding unrelated ones to prevent confusion in clustering.
A small space risks losing critical discriminative information, whereas a large one may introduce excessive general semantics, diminishing clustering effectiveness.
To ensure an appropriate size, we follow the setting in~\cite{TAC}: the dataset is clustered into $n = N / 300$ groups, and the closest nouns and captions are retrieved for each cluster to define the semantic space.
Specifically, we apply k-means clustering on the image embeddings and compute the semantic center of the $j$-th cluster:
\begin{gather}
    \mathbf{p}_j = \frac{1}{|\mathcal{P}_j|} \sum_{i\in \mathcal{P}_j} \x_i, \ j \in [1, n],
    \label{eq:img_k_means}
\end{gather}
where $\mathcal{P}_j$ denotes the set of images assigned to the $j$-th cluster.
To identify representative nouns, we first define the probability of the $i$-th noun belonging to the $j$-th cluster as:
\begin{equation}
    p(\mathbf{p}_j|\mathbf{u}_i) = \frac{\exp(\mathbf{u}_i^\top\mathbf{p}_j)}{\sum_{l=1}^n \exp(\mathbf{u}_i^\top\mathbf{p}_l)}. \label{eq:noun_retrieval} 
\end{equation}
We reorder the nouns embeddings according to decreasing probability $p(\mathbf{p}_j|\mathbf{u}_i)$, denoting the reordered sequence as $[\bu_{(1)},...\bu_{(i)},...,\bu_{(T)}]$.  
We then retrieve the top-$K$ nouns candidates with the highest probability for the $j$-th cluster:
\begin{equation}
    \mathcal{U}_j=\{\bu_{(i)}|\ i \leq K\}.
    \label{eq:topC}
\end{equation}
Finally, we take the union of these sets to form the selected nouns across all clusters: $\mathcal{U}=\bigcup_{j=1}^n \mathcal{U}_j$. The closest captions, $\mathcal{V}$, for all clusters are in the same manner.

\noindent{\textbf{Constructing counterparts.}} The selected nouns and captions capture different aspects of image semantics: nouns typically represent general object names, and captions often include fine-grained attributes.
Leveraging these selected embeddings, $\mathcal{U}$ and $\mathcal{V}$, we aim to compute the noun counterpart and the caption counterpart for each image $\x_i$ by assessing the distributional distances between image and textual embeddings. To achieve this, we employ Optimal Transport (OT)~\cite{villani2009optimal}, to aligning their distributions. Mathematically, we define two empirical distributions $P$ and $Q$ to model the sets of two modalities:
\begin{equation}
    {P} = \sum_{i=1}^{N} \frac{1}{N}\delta_{\x_i}, \ {Q} = \sum_{j=1}^{|\mathcal{U}|} \frac{1}{|\mathcal{U}|}\delta_{\bu_j}, 
\label{eq:ot_def}
\end{equation}
where $\delta_{\x_i}$ and $\delta_{\bu_j}$ are the Dirac delta function centered at $\x_i$ and $\bu_j$, respectively. The OT distance between $P$ and $Q$ is thus defined as: 
\begin{equation}
d_{\mathrm{OT}}(P,Q;\mathbf{C}):=\min_{\mathbf{T}\in\Pi(P,Q)}\langle\mathbf{T},\mathbf{C}\rangle, 
\label{eq:ot}
\end{equation} 
where $\langle\cdot,\cdot\rangle$ denotes the Frobenius dot-product, $\textbf{T}\in\mathbb{R}^{N\times|\mathcal{U}|}$ is the transport plan, and $\textbf{C}\in\mathbb{R}^{N\times|\mathcal{U}|}$ the cost matrix. Each element $c_{i,j}$ is calculated using cosine similarity:
\begin{equation}
    c_{i,j} = 1 - s_{i,j}^{u},
\end{equation}
where $s_{i,j}^{u} = {\x_{i}\top}{\bu_{j}/(||{\x}_{i}}||\cdot||\bu_{j}||)$. The goal of OT is to minimize the total cost to transporting mass from $\x$ to $\bu$.
To achieve this, we use the Sinkhorn-Knopp algorithm~\cite{sinkhorn} to approximate the solution $\textbf{T}^{u}$. 
After obtaining the transport plan $\textbf{T}^{u}$, we compute the noun counterpart $\x^u_i$ for each image $\x_i$:
\begin{equation}
    \x^u_i = \sum_{j=1}^{|\mathcal{U}|}t_{i,j}^us_{i,j}^{u}\bu_j 
    \label{eq:noun_counterpart}
\end{equation}
where $t_{i,j}$ is the $i$-th column and $j$-th row element in $\textbf{T}^{u}$. 
Following the same procedure, we also compute the caption counterpart $\x^v_i$ for each image $\x_i$:
\begin{equation}
    \x^v_i = \sum_{j=1}^{|\mathcal{V}|}t_{i,j}^vs_{i,j}^{v}\bv_j,
    \label{eq:des_counterpart}
\end{equation}
where $t_{i,j}^v$ is the element of optimal transport plan $\textbf{T}^v$ between $\x_i$ and $\bv_j$. While optimal transport offers a principled matching strategy,  an alternative and commonly used approach is to apply similarity-based softmax weighting. Specifically, one may compute \(\x^u_i = \sum_{j=1}^{|\mathcal{U}|} w_{i,j}^u \bu_j\), with softmax weights \(w_{i,j}^u = \frac{\exp(s_{i,j}^u)}{\sum_{k=1}^{|\mathcal{U}|} \exp(s_{i,k}^u)}\). We compare this similarity+softmax approach to OT in the following theorem.
\begin{theorem}
\label{thm:ot_vs_softmax}
Let $\{\x_i\}_{i=1}^N \subset \mathbb{R}^d$ be a set of image embeddings, and let $ \{\bu_j\}_{j=1}^{|\mathcal{U}|} \subset \mathbb{R}^d$ denote a set of textual embeddings (e.g., noun or caption representations). For clarity, we present the theoretical analysis using noun embeddings as a representative case. The softmax-based aggregation:
\begin{equation}
    \x_i^u = \sum_{j=1}^{|\mathcal{U}|} w_{i,j}^u \bu_j, \quad w_{i,j}^u = \frac{\exp(s_{i,j}^u)}{\sum_{k=1}^{|\mathcal{U}|} \exp(s_{i,k}^u)},
\end{equation}
is a special case of entropic OT when the column-wise marginal constraint \(\sum_i t_{i,j}^u = \frac{1}{|\mathcal{U}|}\) is relaxed. For any \(\delta > 0\), with probability at least \(1 - \delta\) over Sinkhorn iterations, softmax incurs higher semantic error than OT.
\end{theorem}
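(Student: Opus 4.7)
My plan is to decompose the theorem into its two claims and handle them in sequence. The first claim, that the softmax aggregation is a special case of entropic OT once the column marginal is relaxed, reduces to a KKT computation. The second claim, that full OT achieves lower semantic error with high probability over Sinkhorn iterations, combines a feasibility/suboptimality argument with a Sinkhorn convergence bound.

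For the first claim, I would write the entropic OT program
\begin{equation*}
\min_{T \in \Pi(P,Q)} \langle T, C \rangle + \epsilon \sum_{i,j} t_{i,j}\log t_{i,j},
\end{equation*}
drop the column constraint $\sum_i t_{i,j} = 1/|\mathcal{U}|$, and observe that the remaining problem decouples across rows. For each row $i$, introducing a Lagrange multiplier for the surviving constraint $\sum_j t_{i,j} = 1/N$ and setting the derivative in $t_{i,j}$ to zero yields $t_{i,j}^u \propto \exp(-c_{i,j}/\epsilon)$. Substituting $c_{i,j} = 1 - s_{i,j}^u$ from the cost definition and taking $\epsilon = 1$ recovers exactly the softmax weights $w_{i,j}^u$ up to the overall row mass $1/N$, which only rescales $\x^u_i$ uniformly and does not affect directional comparisons.

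For the second claim, I would measure semantic error by the transport cost $\langle T, C \rangle$ under the same cosine-based $c_{i,j}$. By Part~1 the softmax plan $T^{\mathrm{sm}}$ solves only the row-relaxed program, so in general it violates the column marginal and is strictly suboptimal for the true OT program whose optimum is $T^\star$. Strong convexity of the entropic regularizer then produces a positive gap $\gamma := \langle T^{\mathrm{sm}}, C \rangle - \langle T^\star, C \rangle > 0$, lower-bounded by the squared column-marginal deviation of $T^{\mathrm{sm}}$. Standard Sinkhorn convergence rates then guarantee that after a number of iterations of order $\log(1/\delta)/\gamma$ the Sinkhorn iterate $\hat{T}^{(k)}$ satisfies $\langle \hat{T}^{(k)}, C \rangle - \langle T^\star, C \rangle \le \gamma/2$ with probability at least $1-\delta$, from which the desired $\langle \hat{T}^{(k)}, C \rangle < \langle T^{\mathrm{sm}}, C \rangle$ follows directly.

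The main obstacle is producing a clean, quantitative version of the gap $\gamma$ and reconciling it with the slightly nonstandard probabilistic language. Two subtleties arise: Sinkhorn is a deterministic algorithm, so the ``probability over Sinkhorn iterations'' must be interpreted as coming from a randomized stopping rule or random initialization used in practice; and the gap $\gamma$ is controlled by how far the softmax plan sits from the column-marginal manifold, so a clean lower bound likely requires a mild non-degeneracy condition on the cost matrix ruling out the coincidence that the softmax plan is already column-balanced. Once both of these are pinned down, the remaining argument is convex duality plus a textbook Sinkhorn rate and contains no surprises.
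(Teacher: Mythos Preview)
Your treatment of the first claim matches the paper's: both recognize that dropping the column marginal from entropic OT decouples the rows and yields $t_{i,j}\propto\exp(-c_{i,j}/\epsilon)$, which with $c_{i,j}=1-s_{i,j}^u$ and $\epsilon=1$ recovers the softmax weights up to the row mass $1/N$. The paper states this by exhibiting the Sinkhorn factorization $t_{i,j}^u=a_ib_j\exp(-c_{i,j}/\epsilon)$ and observing that the column constraint is violated; your KKT derivation is a cleaner route to the same conclusion.

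For the second claim there is a genuine gap. Your assertion that $\gamma:=\langle T^{\mathrm{sm}},C\rangle-\langle T^\star,C\rangle>0$ does not follow from relaxation or strong convexity, and in fact the inequality can go the other way. The row-relaxed feasible set \emph{contains} $\Pi(P,Q)$, so the relaxed optimum $T^{\mathrm{sm}}$ attains \emph{lower} regularized objective than $T^\star$; nothing then forces the unregularized transport cost to be higher. A two-point example with $C=\bigl(\begin{smallmatrix}0&1\\0&1\end{smallmatrix}\bigr)$ already gives $\langle T^{\mathrm{sm}},C\rangle<\langle T^\star,C\rangle$: both softmax rows concentrate on the cheap column, while the column marginal forces OT to spread mass and pay more. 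Strong convexity compares feasible points of a single program, but $T^{\mathrm{sm}}\notin\Pi(P,Q)$, so it cannot be invoked in the direction you need. Your downstream Sinkhorn-rate step inherits this sign problem and therefore does not close.

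The paper's own proof is informal on the transport-cost inequality (it asserts $E_{\text{softmax}}\ge E_{\text{OT}}$ ``as OT minimizes Eq.~(5)'', which has the same feasibility issue), but it does not rest the probabilistic claim on that inequality alone. Instead it introduces a second notion of semantic error: the intra-cluster variance of the aggregated counterparts $\sum_j t_{i,j}\bu_j$. The paper argues that the Sinkhorn column scaling $b_j$ balances noun usage and thereby reduces this variance within each cluster relative to softmax, attaching the ``with probability at least $1-\zeta$'' qualifier to this variance statement rather than to a convergence-rate bound. This is a different mechanism from the transport-cost gap you pursue, and it is what the paper actually leans on for the probabilistic conclusion. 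Your proposal does not engage with this variance argument, so even setting aside the sign issue it does not reproduce the paper's line of reasoning.
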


\begin{proof}
The entropic OT formulation defines transport weights as:
\begin{equation}
    t_{i,j}^u = a_i b_j \exp\left(-\frac{c_{i,j}}{\epsilon}\right), \quad c_{i,j} = 1 - s_{i,j}^u,
\end{equation}
where $a_i$, $b_j$ are positive scaling factors. These factors are determined via Sinkhorn iterations such that the transport matrix $\mathbf{T}^u$ satisfies the marginal constraints:
\begin{equation}
    \sum_j t_{i,j}^u = \frac{1}{N}, \quad \sum_i t_{i,j}^u = \frac{1}{|\mathcal{U}|}.
\end{equation}
Softmax weights correspond to:
\begin{equation}
    t_{i,j}^u = \frac{w_{i,j}^u}{N}, \quad w_{i,j}^u = \frac{\exp(s_{i,j}^u)}{\sum_k \exp(s_{i,k}^u)},
\end{equation}
which satisfies \(\sum_j t_{i,j}^u = \frac{1}{N}\), but generally violates the column-wise constraint \(\sum_i t_{i,j}^u = \frac{1}{|\mathcal{U}|}\), i.e., noun embeddings are not used in a balanced way. As for optimization, softmax minimizes a per-image cost \(\sum_j w_{i,j}^u c_{i,j} + \sum_j w_{i,j}^u \log w_{i,j}^u\) subject to \(\sum_j w_{i,j}^u = 1\), while OT globally minimizes a regularized cost under marginal constraints. Hence,
\begin{equation}
    E_{\text{softmax}} = \sum_{i,j} \frac{w_{i,j}^u}{N} c_{i,j} \geq \sum_{i,j} t_{i,j}^u c_{i,j} = E_{\text{OT}},
\end{equation}
as OT minimizes Eq.~\eqref{eq:ot}. With probability at least \(1 - \zeta\) over Sinkhorn iterations~\cite{sinkhorn}, OT’s column constraint, via \(b_j\), balances noun usage, reducing intra-cluster variance \(\mathbb{E}_{(i,i') \in Z_k} \left[ \left\| \sum_j t_{i,j}^u \bu_j - \sum_j t_{i',j}^u \bu_j \right\|^2 \right] \leq \mathbb{E}_{(i,i') \in C_k} \left[ \left\| \sum_j w_{i,j}^u \bu_j - \sum_j w_{i',j}^u \bu_j \right\|^2 \right]\) for cluster \(Z_k\), enhancing clustering. 
\end{proof}

\subsection{Adaptive Semantics Fusion}\label{sec:semantics_fusion}
Given multimodality features ${\x}_i$, ${\x}_i^u$, and ${\x}_i^v$ extracted from image, noun, and caption modalities respectively, a naive strategy is to concatenate them into $\{[{\x}_i; {\x}_i^u; {\x}_i^v]\}_{i=1}^N$ and perform k-means clustering. Although effective to some extent (Section~\ref{subsubsec:combine}), this approach neglects the varying contribution and interaction among modalities. To enable adaptive feature fusion, we introduce a prototype-guided weighting mechanism. 
Let ${\x}_i$, ${\x}_i^u$, and ${\x}_i^v$ represent the image, noun, and caption features for the $i$-th instance. We define a unified modality set $\mathcal{M} = \{(m): m \in \{{\x}_i, {\x}_i^u, {\x}_i^v\}\}$ and compute the semantic prototype by averaging over all modalities:
\begin{equation}
    \x_i^{p} = \frac{1}{|\mathcal{M}|} \sum_{m \in \mathcal{M}} {\x}_i^{(m)}.
\end{equation}

Then, we compute the cosine similarity between each modality-specific feature and the prototype:
\begin{equation}\label{eq:simi_alpha}
    \alpha_i^{(m)} = \frac{\langle {\x}_i^{(m)}, {\x}_i^p \rangle}{\|{\x}_i^{(m)}\| \cdot \|{\x}_i^p\|}, \quad \forall m \in \mathcal{M}.
\end{equation}
Let $\bm{\alpha}_i = [\alpha_i^{(m)}]_{m \in \mathcal{M}}$ be the similarity vector. We apply temperature-scaled softmax to obtain the modality weights:
\begin{equation}
    \bm{\beta}_i = \text{softmax}\left( \bm{\alpha}_i / \gamma \right), \quad \text{with } \bm{\beta}_i = [\beta_i^{(m)}]_{m \in \mathcal{M}}.
\end{equation}
Finally, the fused representation for each instance is obtained via a weighted combination of modality-specific features:
\begin{equation}\label{eq:fused_fea}
    \bar{{\x}}_i = \sum_{m \in \mathcal{M}} \beta_i^{(m)} {\x}_i^{(m)}.
\end{equation}
This formulation enables instance-wise adaptive fusion, allowing the model to emphasize modalities that are more semantically aligned with the visual content. Such content-aware integration facilitates fine-grained cross-modal representation learning. Subsequently, we perform clustering on the fused embeddings using the standard k-means algorithm:
\begin{equation}
    {y}_i := \text{k-means}(\bar{{\x}}_i), \quad i \in [1, N],
    \label{eq:results}
\end{equation}
where \({y}_i\) denotes the predicted cluster assignment for the \(i\)-th image. The complete procedure of our methods is summarized in Algorithm~\ref{alg:logic}.

\begin{algorithm}[tb]
\footnotesize
   \caption{Pipeline of \textbf{CAE}}
   \label{alg:logic}
\begin{algorithmic}[1]
   \REQUIRE Image embeddings $\{\x_i\}_{i=1}^{N}$, noun embeddings $\{\bu_i\}_{i=1}^{T}$, description embeddings $\{\bv_i\}_{i=1}^{M}$.   \\
   \textbf{Step1: construct semantic space}
   \STATE Construct relevant nouns set $\mathcal{U}$ and descriptions set $\mathcal{V}$ by Eq.~\eqref{eq:noun_retrieval} and Eq.~\eqref{eq:topC}.
   \STATE Build transport plan $\textbf{T}^u$ and $\textbf{T}^v$ by optimize Eq.~\eqref{eq:ot}.
   \STATE Compute counterparts $\{\x_i^{u}\}_{i=1}^{N}$ and $\{\x_i^{v}\}_{i=1}^{N}$ by Eq.~\eqref{eq:noun_counterpart} and Eq.~\eqref{eq:des_counterpart}, respectively. \\
   \textbf{Step2: adaptive semantics fusion}
    \STATE Compute similarity weights $\bm{\alpha}$ by Eq.~\eqref{eq:simi_alpha}. \\
    \STATE Compute fused features $\bar{\x}_{i}$ by Eq.~\eqref{eq:fused_fea}.
    \ENSURE Compute cluster assignments $\{{y}_i\}_{i=1}^{N}$ by Eq.~\eqref{eq:results}.
\end{algorithmic}
\end{algorithm}


\section{Experiments}
In this section, we present the experimental evaluation of our method, including performance comparisons, ablation studies, and clustering visualizations. 
\subsection{Setup}
\noindent{\textbf{Datesets. }}We conduct experiments on five widely used datasets: STL10~\cite{STL}, CIFAR-10~\cite{CIFAR}, CIFAR-20~\cite{CIFAR}, ImageNet-10~\cite{ImageNet_10_Dogs}, and ImageNet-Dogs~\cite{ImageNet_10_Dogs}, and three challenging datasets: DTD~\cite{DTD}, UCF-101~\cite{UCF101}, and ImageNet-1K~\cite{ImageNet}. These datasets cover a diverse range of image categories, allowing us to assess the generalizability of clustering methods across diverse scenarios. 



\noindent{\textbf{Evaluation Metrics.}}
We utilize three popular metrics to evaluate the clustering performance, including  Normalized Mutual Information (NMI), Accuracy (ACC), and Adjusted Rand Index (ARI). The higher values of these metrics indicate the better performance.

\noindent{\textbf{Implementation details.}} To ensure a fair comparison with prior work SIC~\cite{SIC} and TAC~\cite{TAC}, we use the CLIP model~\cite{CLIP} with ViT-B/32 as the image encoder and a Transformer as the text encoder. We assemble the nouns from WordNet in the template as “a photo of [CLASS],” and the captions are filtered in the same manner as used in~\cite{FangIWWSDS22}. The temperature parameter $\gamma$ is set as  0.01, for all datasets. All experiments are conducted on a single Nvidia RTX 3090 GPU.
\vspace{-0.2cm}
\begin{table*}[t]
\caption{Clustering performance on 5 widely-used image clustering datasets. The best and second best results are denoted in \textbf{bold} and \underline{underline}, respectively.}
\label{tab:classic_res}
\centering
\tabstyle{4pt}
\begin{tabular}{cc|ccc|ccc|ccc|ccc|ccc|cc}
\toprule
 \multicolumn{2}{c|}{\multirow{2}{*}{Method}}    & \multicolumn{3}{c|}{STL-10} & \multicolumn{3}{c|}{CIFAR-10} & \multicolumn{3}{c|}{CIFAR-20} & \multicolumn{3}{c|}{ImageNet-10} & \multicolumn{3}{c|}{ImageNet-Dogs} & \multirow{2}{*}{AVG} \\ \cmidrule{3-17}
&          & NMI  & ACC  & ARI  & NMI  & ACC  & ARI  & NMI  & ACC  & ARI  & NMI  & ACC  & ARI  & NMI  & ACC  & ARI  & \\ \midrule

\multicolumn{1}{c|}{\multirow{19}{*}{\rotatebox{90}{\scriptsize{Training-based}}}} & PICA (CVPR20)            & 61.1 & 71.3 & 53.1 & 59.1 & 69.6 & 51.2 & 31.0 & 33.7 & 17.1 & 80.2 & 87.0 & 76.1 & 35.2 & 35.3 & 20.1 & 52.1 \\
\multicolumn{1}{c|}{} & IDFD (ICLR20)          & 64.3 & 75.6 & 57.5 & 71.1 & 81.5 & 66.3 & 42.6 & 42.5 & 26.4 & 89.8 & 95.4 & 90.1 & 54.6 & 59.1 & 41.3 & 63.9 \\
\multicolumn{1}{c|}{} & SCAN (ECCV20)           & 69.8 & 80.9 & 64.6 & 79.7 & 88.3 & 77.2 & 48.6 & 50.7 & 33.3 & --   & --   & --   & 61.2 & 59.3 & 45.7 & -- \\
\multicolumn{1}{c|}{} & MiCE (ICLR20)           & 63.5 & 75.2 & 57.5 & 73.7 & 83.5 & 69.8 & 43.6 & 44.0 & 28.0 & --   & --   & --   & 42.3 & 43.9 & 28.6 & -- \\
\multicolumn{1}{c|}{} & CC (AAAI21)              & 76.4 & 85.0 & 72.6 & 70.5 & 79.0 & 63.7 & 43.1 & 42.9 & 26.6 & 85.9 & 89.3 & 82.2 & 44.5 & 42.9 & 27.4 & 62.1 \\
\multicolumn{1}{c|}{} & GCC (ICCV21)            & 68.4 & 78.8 & 63.1 & 76.4 & 85.6 & 72.8 & 47.2 & 47.2 & 30.5 & 84.2 & 90.1 & 82.2 & 49.0 & 52.6 & 36.2 & 64.3 \\
\multicolumn{1}{c|}{} & NNM (CVPR21)            & 66.3 & 76.8 & 59.6 & 73.7 & 83.7 & 69.4 & 48.0 & 45.9 & 30.2 & --   & --   & --   & 60.4 & 58.6 & 44.9 & -- \\
\multicolumn{1}{c|}{} & TCC (NeurIPS21)             & 73.2 & 81.4 & 68.9 & 79.0 & 90.6 & 73.3 & 47.9 & 49.1 & 31.2 & 84.8 & 89.7 & 82.5 & 55.4 & 59.5 & 41.7 & 67.2 \\ 
\multicolumn{1}{c|}{} & TCL (IJCV22)             & 79.9 & 86.8 & 75.7 & 81.9 & 88.7 & 78.0 & 52.9 & 53.1 & 35.7 & 87.5 & 89.5 & 83.7 & 62.3 & 64.4 & 51.6 & 71.4 \\
\multicolumn{1}{c|}{} & SPICE (TIP22)         & 81.7 & 90.8 & 81.2 & 73.4 & 83.8 & 70.5 & 44.8 & 46.8 & 29.4 & 82.8 & 92.1 & 83.6 & 57.2 & 64.6 & 47.9 & 68.7 \\ 
\multicolumn{1}{c|}{} & SIC (AAAI23)            & \underline{95.3} & \underline{98.1} & \underline{95.9} & \textbf{84.7} & \textbf{92.6} & \textbf{84.4} & 59.3 & \underline{58.3} & \underline{43.9} & 97.0 & 98.2 & 96.1 & 69.0 & 69.7 & 55.8 & \underline{79.9} \\
\multicolumn{1}{c|}{} & SeCu (ICCV23)             & 70.7 & 81.4 & 65.7 & 79.9 & 88.5 & 78.2 & 51.6 & 51.6 & 36.0 & - & - & - & - & - & - & - \\
\multicolumn{1}{c|}{} & DivClust (CVPR23)           & - & - & - & 71.0 & 81.5 & 67.5 & 44.0 & 43.7 & 28.3 & 85.0 & 90.0 & 81.9 & 51.6 & 52.9 & 37.6 & - \\
\multicolumn{1}{c|}{} & RPSC (AAAI24)             & {83.8}  &92.0  &83.4   &75.4  &85.7  &73.1  &47.6 & 51.8 & 34.1 & 83.0 & 92.7 & 85.8 & 55.2 & 64.0 & 46.5 & 70.2   \\ 
\multicolumn{1}{c|}{} & LFSS (ICML25)      & 77.1 & 86.1 & 74.0 & 87.2 & 93.4 & 86.8 & 59.9 & 58.7 & 43.5 & 85.6 & 93.2 & 85.7 & 61.7 & 69.1 & 53.3 & 74.3 \\ \midrule
 \multicolumn{1}{c|}{\multirow{4}{*}{\rotatebox{90}{\scriptsize{Training-free}}}} & CLIP (k-means)    & 91.7 & 94.3 & 89.1 & 70.3 & 74.2 & 61.6 & 49.9 & 45.5 & 28.3 & 96.9 & 98.2 & 96.1 & 39.8 & 38.1 & 20.1 & 66.3 \\
\multicolumn{1}{c|}{} & TAC (ICML24)      & 92.3 & 94.5 & 89.5 & 80.8 & 90.1 & 79.8 & \underline{60.7} & 55.8 & 42.7 & \underline{97.5} & \underline{98.6} & \underline{97.0} & \underline{75.1} & \underline{75.1} & \underline{63.6} & 79.5 \\
\multicolumn{1}{c|}{} & \textbf{\ours} (Ours)           & \textbf{95.5} & \textbf{98.2} & \textbf{96.7} & \underline{81.7} & \underline{90.9} & \underline{80.5} & \textbf{62.8} & \textbf{60.9} & \textbf{45.9} & \textbf{98.1} & \textbf{98.8} & \textbf{97.4} & \textbf{77.9} & \textbf{77.5} & \textbf{66.3} & \textbf{81.9} \\
\multicolumn{1}{c|}{} & \textcolor{gray}{CLIP (zero-shot)}  & \textcolor{gray}{93.9} & \textcolor{gray}{97.1} & \textcolor{gray}{93.7} & \textcolor{gray}{80.7} & \textcolor{gray}{90.0} & \textcolor{gray}{79.3} & \textcolor{gray}{55.3} & \textcolor{gray}{58.3} & \textcolor{gray}{39.8} & \textcolor{gray}{95.8} & \textcolor{gray}{97.6} & \textcolor{gray}{94.9} & \textcolor{gray}{73.5} & \textcolor{gray}{72.8} & \textcolor{gray}{58.2} & \textcolor{gray}{78.7} \\
\bottomrule
\end{tabular}%
\end{table*}

\begin{table*}[t]
\centering
\caption{Clustering performance on 3 challenging image clustering datasets. The best and second best results are denoted in \textbf{bold} and \underline{underline}, respectively.}
\label{tab: other_complex}
\resizebox{0.8\textwidth}{!}{
\begin{tabular}{cc|ccc|ccc|ccc|c}
\toprule
 \multicolumn{2}{c|}{\multirow{2}{*}{Method}}  & \multicolumn{3}{c|}{DTD} & \multicolumn{3}{c|}{UCF-101} & \multicolumn{3}{c|}{ImageNet-1K} & \multirow{2}{*}{AVG} \\ \cmidrule{3-11}
&  & NMI  & ACC  & ARI  & NMI  & ACC  & ARI  & NMI  & ACC  & ARI  & \\ \midrule

\multicolumn{1}{c|}{\multirow{2}{*}{\small{Training-based}}}  & SCAN (ECCV20)    & 59.4 & \underline{46.4} & \underline{31.7} & 79.7 & 61.1 & 53.1 & 74.7 & 44.7 & 32.4 & 53.7 \\
\multicolumn{1}{c|}{}& SIC (AAAI23)      & 59.6 & 45.9 & 30.5 & 81.0 & \underline{61.9} & \underline{53.6} & 77.2 & 47.0 & 34.3 & 54.6 \\ \midrule
\multicolumn{1}{c|}{\multirow{4}{*}{{\small{Training-free}}}}  & CLIP (k-means)   & 57.3 & 42.6 & 27.4 & 79.5 & 58.2 & 47.6 & 72.3 & 38.9 & 27.1 & 50.1 \\
\multicolumn{1}{c|}{} & TAC (ICML24)     & \underline{60.1} & 45.9 & 29.0 & \underline{81.6} & 61.3 & 52.4 & \underline{77.8} & \underline{48.9} & \underline{36.4} & \underline{54.8} \\
\multicolumn{1}{c|}{} & \textbf{\ours}  (Ours)          & \textbf{62.1} & \textbf{46.7} & \textbf{31.9} & \textbf{82.8} & \textbf{63.5} &      \textbf{54.9} & \textbf{79.3} & \textbf{53.1} & \textbf{39.3} & \textbf{57.0} \\
\multicolumn{1}{c|}{} & \textcolor{gray}{CLIP (zero-shot)}   & \textcolor{gray}{56.5} & \textcolor{gray}{43.1} & \textcolor{gray}{26.9} & \textcolor{gray}{79.9} & \textcolor{gray}{63.4} & \textcolor{gray}{50.2} & \textcolor{gray}{81.0} & \textcolor{gray}{63.6} & \textcolor{gray}{45.4} & \textcolor{gray}{56.7} \\ \bottomrule
\end{tabular}
}
\end{table*}

\subsection{Main Results}
We compare our approach with state-of-the-art (SOTA) methods and categorize the evaluated methods into training-based and training-free. We first evaluate our method on 5 classic clustering datasets, comparing it with 19 training-based methods and 2 training-free methods. Previous works have utilized various backbones, such as ResNet-34 (SPICE~\cite{SPICE}, TCL~\cite{TCL}) and ResNet-18 (SPICE~\cite{SPICE}, SCAN~\cite{SCAN}). 
With the rapid development of model pre-training, CLIP has been adopted in clustering tasks~\cite{TAC, SIC}. The comparison includes a wide range of representative methods, such as 
PICA~\cite{PICA}, 
IDFD~\cite{IDFD}, 
Mice~\cite{MiCE}, 
CC~\cite{CC}, 
GCC~\cite{GCC}, 
NNM~\cite{NNM}, 
TCC~\cite{DCCM}, 
SeCu~\cite{SeCu}, 
DivClust~\cite{DivClust}, 
PRSC~\cite{RPSC}, LFSS~\cite{LFSS}. 
Our work mainly centers on comparison with zero-shot CLIP and CLIP-based methods. The results in Table~\ref{tab:classic_res} clearly show that our method consistently outperforms TAC~\cite{TAC} (training-free) on 5 classic datasets, achieving a notable 7.2\% improvement in ARI on STL-10 and a 5.1\% accuracy improvement on CIFAR-20. This successful performance is attributed to the leveraged complementary semantics from selected nouns and captions, which demonstrate the effectiveness of our method. On the other hand, we observe that on the CIFAR-10 dataset, SIC~\cite{SIC} achieves the best performance, supported by the adaptation of features extracted from CLIP, which requires trainable parameters and additional training time. In contrast, our method is training-free and achieves the second-best results. Table~\ref{tab: other_complex} depicts the results on three challenge datasets, and our method still achieves the best performance. Specifically, our \textbf{$\ours$} outperforms TAC~\cite{TAC} over 4\% ACC on Imagenet-1K, which is a significant improvement. Moreover, the last row in Table~\ref{tab: other_complex} indicates the zero-shot performance of CLIP, which relies on the candidate class names of the images. Our method
which depends solely on selected nouns and captions, outperforms zero-shot CLIP on DTD and UCF-101 datasets, highlighting the effectiveness of our approach in applying CLIP for clustering tasks. 

\begin{table}[h]
\centering
\caption{Clustering performance of our method using different textual semantics.}
\label{tab: semantics}
\resizebox{0.45\textwidth}{!}{%
\begin{tabular}{lccc|ccc|ccc}
\toprule
&
\Large{\multirow{2}{*}{${\mathbf{x}}^u$}} & 
\Large{\multirow{2}{*}{${\mathbf{x}}^v$}} & 
\Large{\multirow{2}{*}{${\mathbf{\bar{x}}}$}} & 
\multicolumn{3}{c|}{ImageNet-Dogs} & \multicolumn{3}{c}{UCF-101} \\ \cmidrule{5-10} 
 & & &   &    NMI & ACC & ARI & NMI & ACC & ARI \\ \midrule
(1)&\ding{52} &   & &         75.3 & 75.2 & 63.9& 81.7 & 61.5 & 52.7 \\
(2)&  &  \ding{52} &   &      75.7 &75.5 & 64.0 & 81.4 & 61.3 & 52.3 \\
(3)&\ding{52}&   \ding{52}&   &76.4 & 76.3 & 64.5 & 82.3 & 62.1 & 52.9 \\ \midrule
(4)& &  &  \ding{52}&  \textbf{77.9} & \textbf{77.5} & \textbf{66.3} & \textbf{82.8} & \textbf{63.5} & \textbf{54.9} \\ \bottomrule
\end{tabular}%
}
\vspace{-0.2cm}
\end{table}

\subsection{Ablation Study}
We conduct experiments to assess the effectiveness of leveraged semantics and combination strategies. Additionally, we also examine the impact of the number of image semantic centers, and the number of top-$K$ selections.


\begin{table}[!htbp]
\centering
\caption{Clustering performance of our method with different combination strategies.}
\label{tab:combination}
\tabstyle{2pt}
\begin{tabular}{l|ccc|ccc|c}
\toprule
 \multicolumn{1}{c|}{\multirow{2}{*}{Method}} & \multicolumn{3}{c|}{ImageNet-Dogs} & \multicolumn{3}{c|}{UCF-101} &  \multirow{2}{*}{AVG} \\ \cmidrule(r){2-7}
        & NMI  & ACC  & ARI  & NMI  & ACC  & ARI  & \\ \midrule
(1) Concat\large{($\mathbf{x}$, ${\mathbf{x}^u}$, ${\mathbf{x}}^v$)}         & 75.7 & 75.6 & 64.2 & 79.7 &59.4  &50.2  & 53.6 \\
(2) Sum\large{($\mathbf{x}$, ${\mathbf{x}^u}$, ${\mathbf{x}^v}$)}            & 76.3 & 76.2 & 65.6 & 82.3 &62.1  &52.9  & 55.6 \\ \midrule
(3) $\sum_{m \in \mathcal{M}} \beta_i^{(m)} {\x}_i^{(m)}$   & \textbf{77.9} & \textbf{77.5} & \textbf{66.3} & \textbf{82.8} & \textbf{63.5} &      \textbf{54.9}  & \textbf{57.0} \\ \bottomrule
\end{tabular}
\end{table}

\noindent{\textbf{Effectiveness of leveraged semantics.}}
Table~\ref{tab: semantics} presents the results of using different textual semantics with images. In Row (1), when only using ${\mathbf{x}^u}$, it achieves comparable performance with that of Row (2), which uses only ${\mathbf{x}^v}$. When both $\mathbf{\bar{x}}^u$ and $\mathbf{\bar{x}}^v$ are combined in Row (3), there is a noticeable performance improvement, indicating that combining both semantic cues can better capture the image semantics. 
Finally, Row (4), which employs fused feature $\bar{\x}$, achieves the best results across all metrics and datasets. For instance, on the ImageNet-Dogs dataset, our method achieves a 2.6\% improvement in NMI, a 2.3\% improvement in ACC, and a 2.4\% increase in ARI compared to Row (1). 

\noindent{\textbf{Effectiveness of combination strategies.}}\label{subsubsec:combine}
As discussed in Section~\ref{sec:semantics_fusion}, TAC directly concatenates noun embeddings with image embeddings. In contrast, our method adds both noun embeddings and captions embeddings to the image embeddings. 
The results in Table~\ref{tab:combination}
 demonstrate that the summation operation is more effective for image clustering compared to concatenation. Specifically, when comparing Rows (3), the adaptive fusion operation results in a 3.9\% ACC improvement on the ImageNet-1K dataset using $\mathbf{\bar{x}}^u$ and $\mathbf{\bar{x}}^v$. These improvements suggest that our method better utilizes the semantics of both image and text, thereby enhancing the representativeness and discriminability of the embeddings.

\begin{figure}[t]
\centering
  \subfloat[UCF-101.]{
    \includegraphics[width=0.45\linewidth]{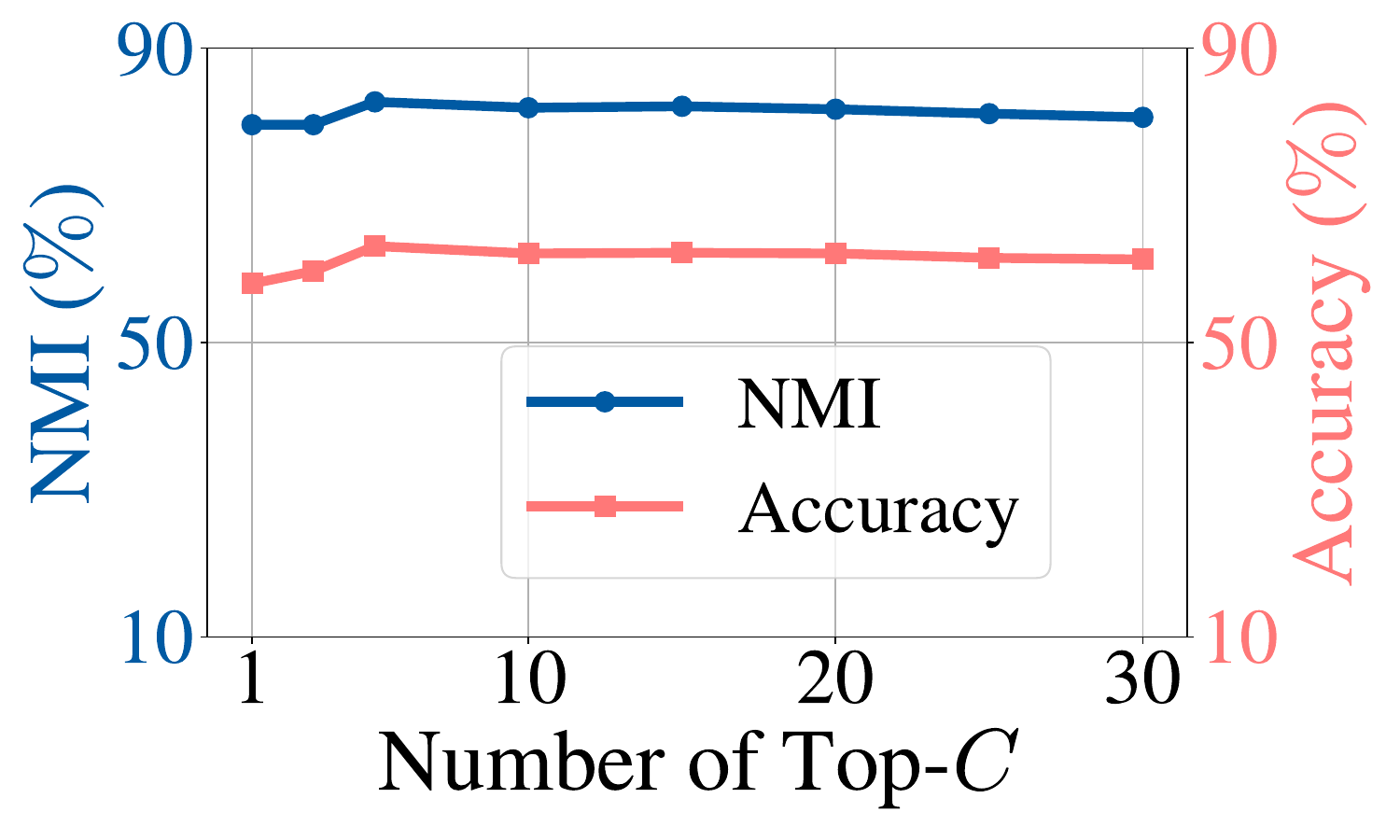}\label{fig:topc_ucf_101}}
  \subfloat[ImageNet-Dogs.]{
    \includegraphics[width=0.45\linewidth]{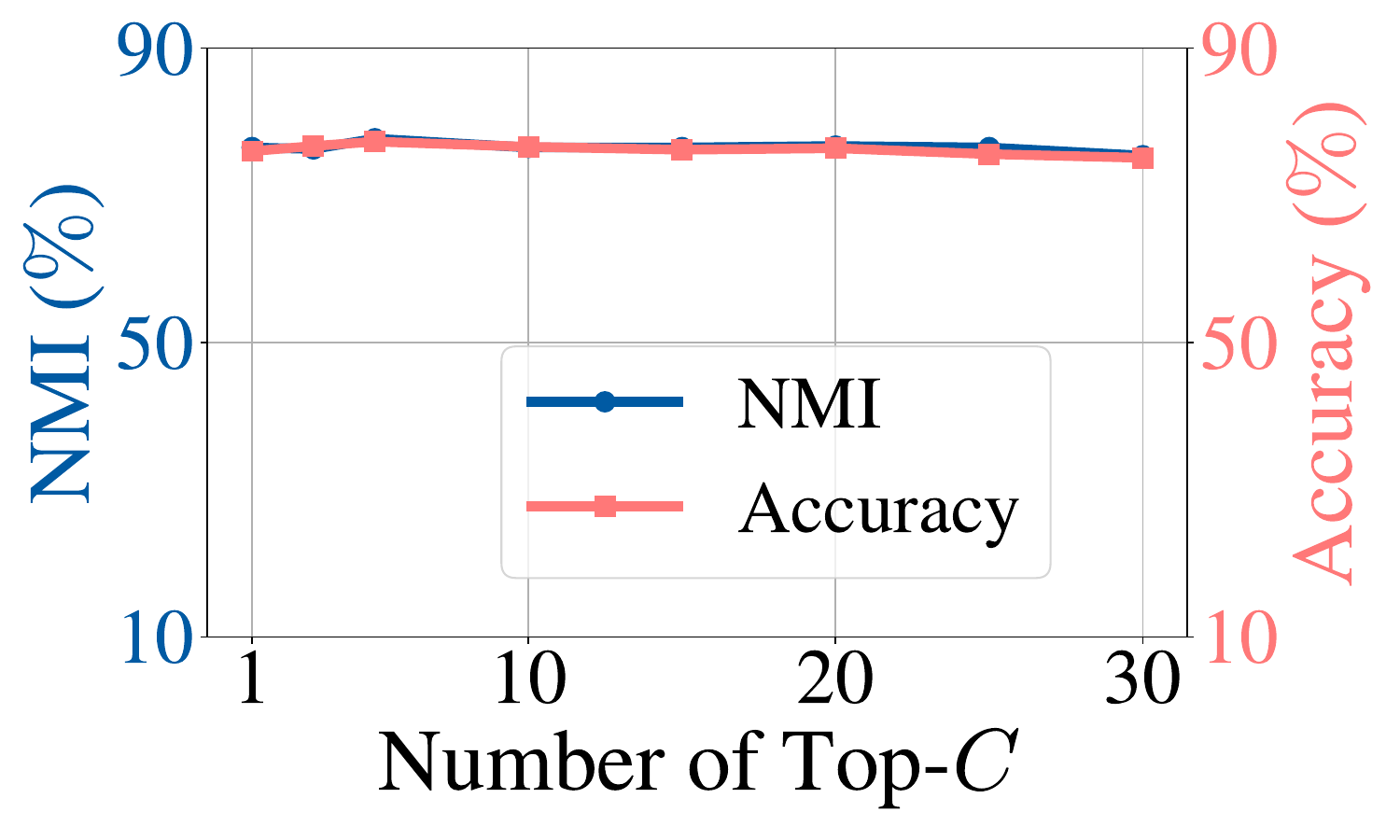}\label{fig:topc_imagenet_dogs}}
  \caption{Analysis of clustering performance by varying the number of image semantic centers on (a) UCF-101 and (b) ImageNet-Dogs datasets, respectively.}\label{fig:img_topc}
  \vspace{-0.5cm}
\end{figure}

\begin{figure}[t]
\centering
  \subfloat[CLIP Image Embeddings.]{
    \includegraphics[width=0.45\linewidth]{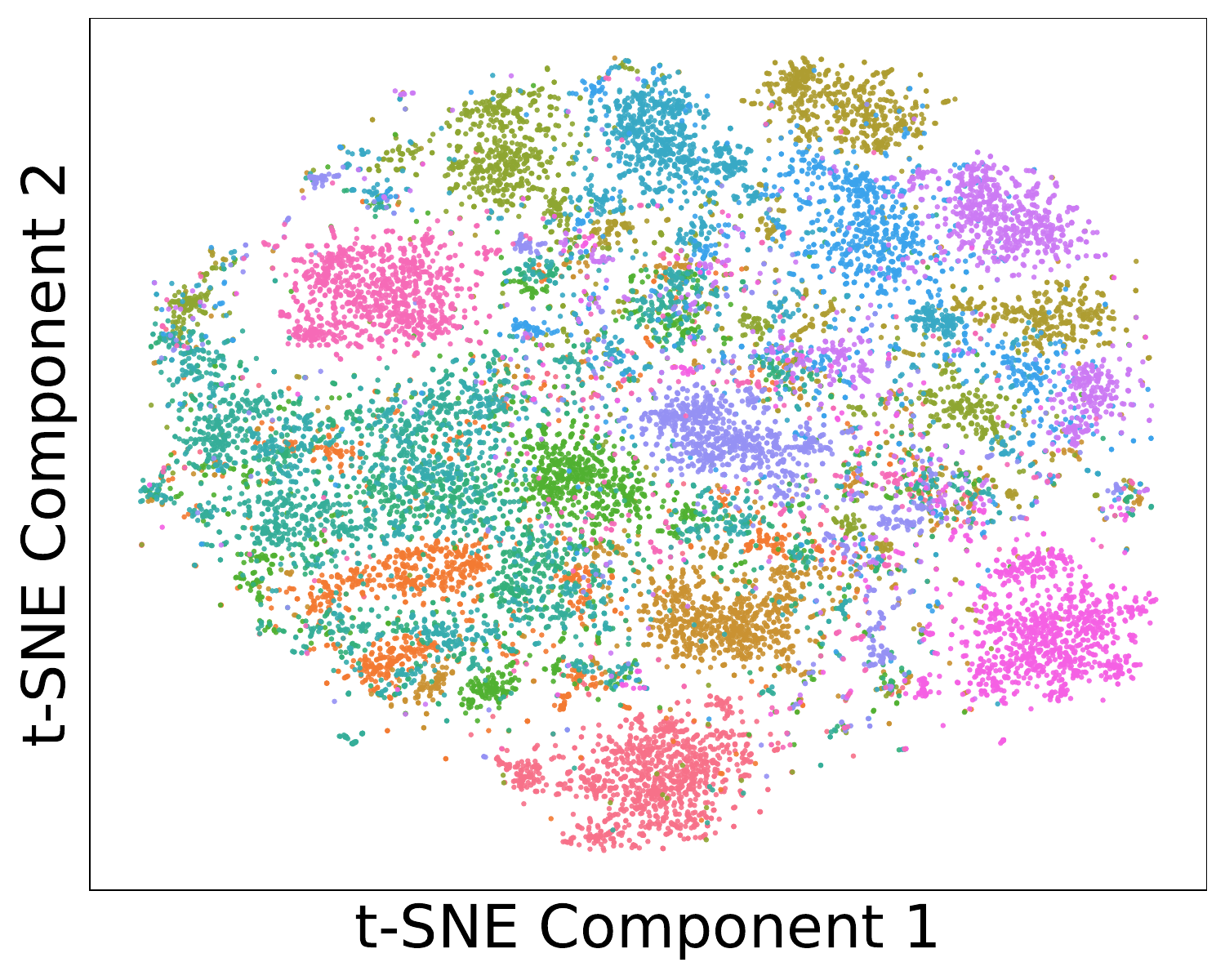}\label{fig:vis_clip}}
  \subfloat[Nouns Embeddings $({\mathbf{x}}^u)$.]{
    \includegraphics[width=0.45\linewidth]{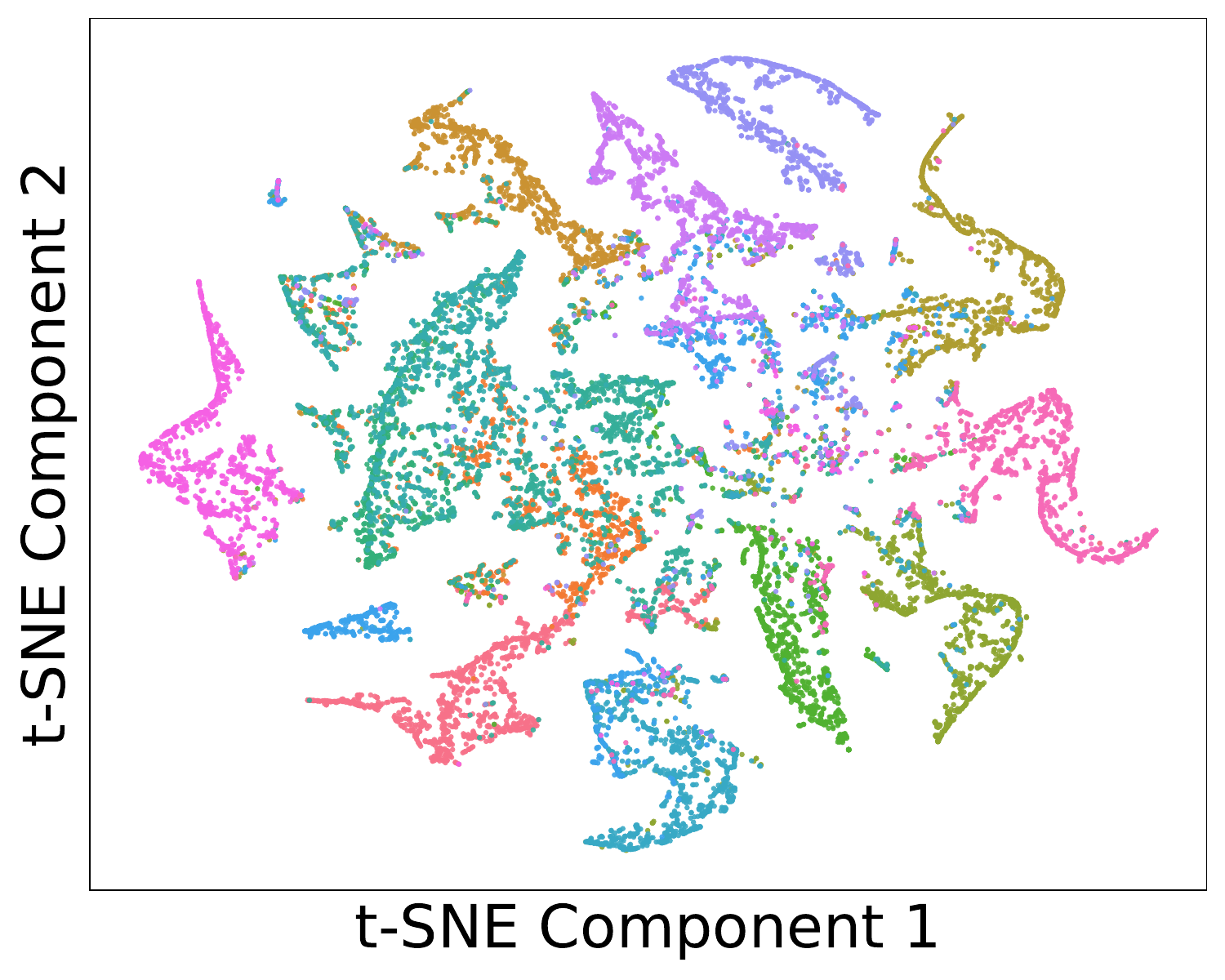}\label{fig:vis_u}} \\
  \subfloat[Caption Embeddings $({\mathbf{x}}^v)$.]{
    \includegraphics[width=0.45\linewidth]{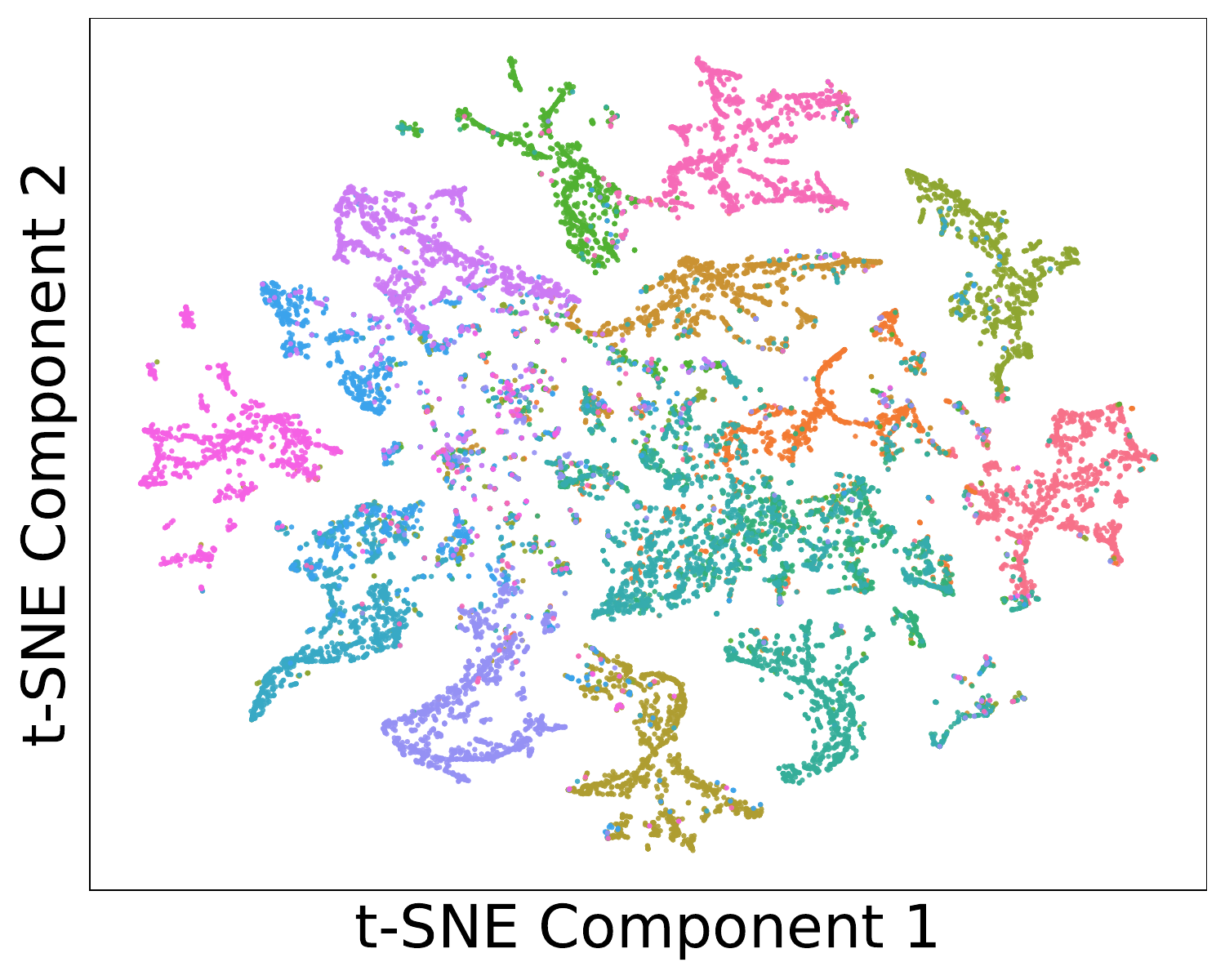}\label{fig:vis_v}}
  \subfloat[Our method (\textbf{\ours}).]{
    \includegraphics[width=0.45\linewidth]{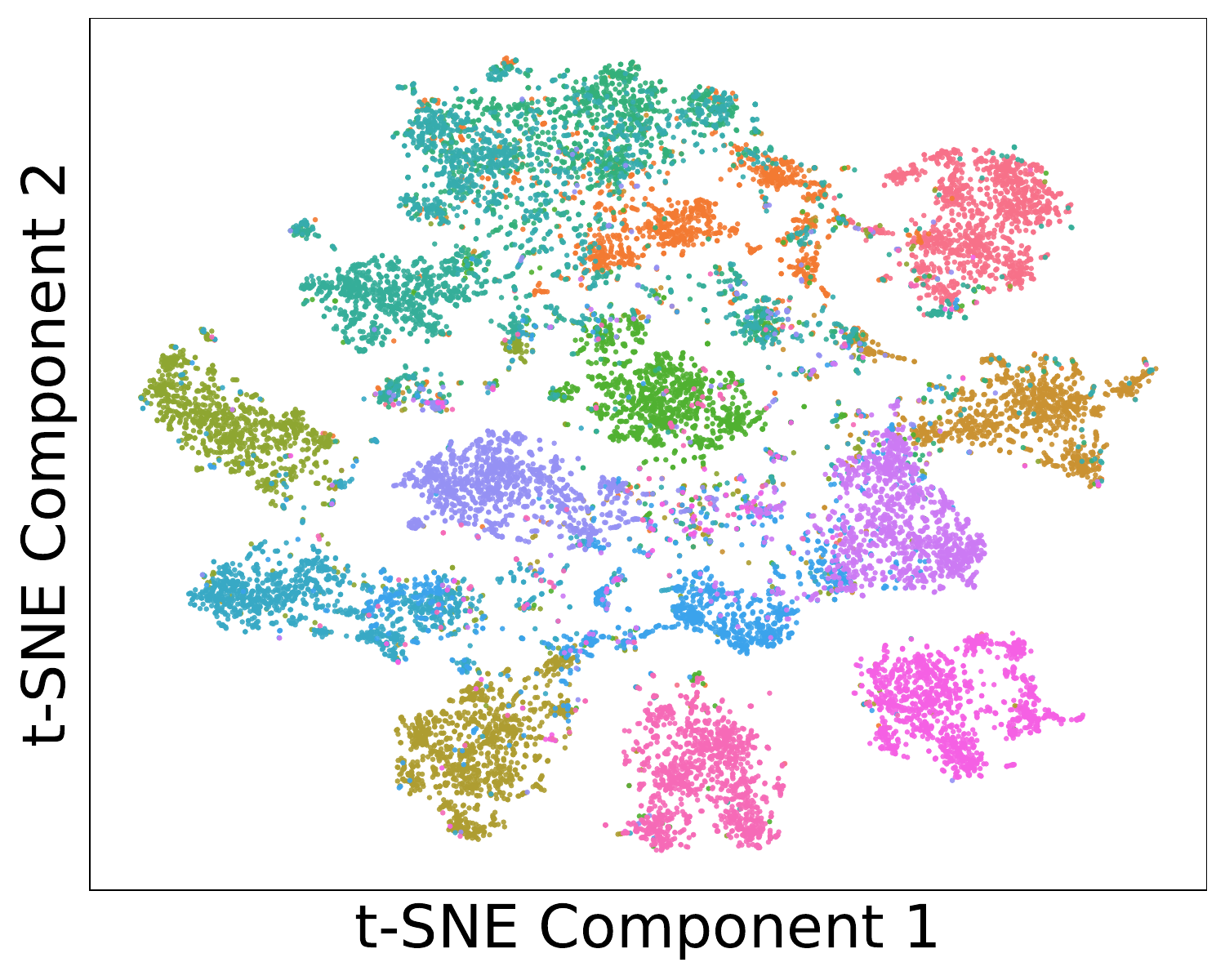}\label{fig:vis_ours}}
  \caption{Visualization of embeddings used for clustering on ImageNet-Dogs dataset. a) image embeddings from CLIP. b) noun embeddings by Eq.~\eqref{eq:noun_counterpart}. c) description embeddings by Eq.~\eqref{eq:des_counterpart}. d) embeddings by our method.}\label{fig:tsne}
  \vspace{-0.3cm}
\end{figure}

\noindent{\textbf{{Effects of top-${K}$ selection.}}
In Eq.~\eqref{eq:topC}, we select the top-$K$ nouns and descriptions of each image center to construct the semantic space. To examine the impact of the selection size, we vary top-$K$ from 1 to 30 and evaluate the performance. As shown in Figure~\ref{fig:topc_ucf_101} and Figure~\ref{fig:topc_imagenet_dogs}, using only one component (top-$K$ = 1) fails to capture sufficient semantics, resulting in suboptimal clustering. On UCF-101 (Figure~\ref{fig:topc_ucf_101}), both NMI and ACC improve as top-$K$ increases, stabilizing when a moderate number of components are used. However, further increasing top-$K$ yields diminishing returns. A similar trend is observed on ImageNet-Dogs (Figure~\ref{fig:topc_imagenet_dogs}), where excessive components introduce noise and slightly degrade performance. This highlights the need to balance semantic richness and redundancy for optimal clustering.

\subsection{Visualization}
We perform t-SNE~\cite{tSNE} visualizations on features computed in our approach. Figure~\ref{fig:vis_clip} shows that CLIP image embeddings exhibit only partial separation with notable overlaps. Figures~\ref{fig:vis_u} and~\ref{fig:vis_v} display noun ($\mathbf{x}^u$) and caption ($\mathbf{x}^v$) embeddings, both offering clearer separation, indicating that semantic cues improve image representation. Finally, Figure~\ref{fig:vis_ours} illustrates our fused embeddings, where clusters become well-separated with minimal overlap, demonstrating markedly enhanced discriminability.

\section{Conclusion}
In this work, we propose a hierar\textbf{C}hical sem\textbf{A}ntic alignm\textbf{E}nt method for image clustering, dubbed \textbf{CAE}, which improves clustering performance without training. Our method constructs the semantic space by measuring distances between images and selected nouns and captions. Then, we adaptively fuse image, noun, and caption features according to their similarity with a semantic prototype. Extensive experiments demonstrate the effectiveness of our approach. In future work, we aim to obtain more precise descriptions for images by leveraging multimodal large language models.

\section{Acknowledgements}
This research is supported by the National Natural Science Foundation of China (No.U24B20180, No. 62576330), National Natural Science Foundation of Anhui (No.2508085MF143), and the advanced computing resources provided by the Supercomputing Center of the USTC.



\bibliography{aaai2026}

\clearpage
\clearpage
\onecolumn

\end{document}